\documentclass{article}

\PassOptionsToPackage{numbers, compress}{natbib}
\usepackage[preprint]{neurips_2026}

\usepackage[utf8]{inputenc} 
\usepackage[T1]{fontenc}    
\usepackage{hyperref}       
\usepackage{url}            
\usepackage{booktabs}       
\usepackage{amsfonts}       
\usepackage{nicefrac}       
\usepackage{xcolor}         
\RequirePackage{graphicx}
\usepackage{xspace}
\usepackage{graphicx}
\usepackage{subfigure}
\usepackage{booktabs}
\usepackage[shortlabels]{enumitem}
\usepackage{pifont}
\usepackage[dvipsnames]{xcolor}

\RequirePackage{fancyhdr}
\RequirePackage{xcolor}
\RequirePackage{algorithm}
\RequirePackage{algorithmic}
\RequirePackage{natbib}
\RequirePackage{eso-pic}
\RequirePackage{forloop}
\RequirePackage{url}
\usepackage{hyperref}
\usepackage{booktabs}
\usepackage{amsmath,amssymb,amsthm}
\usepackage{soul}
\usepackage{subcaption}
\usepackage{multirow}
\usepackage{mathtools}
\usepackage{amsthm}
\usepackage[thinc]{esdiff}
\usepackage[table]{xcolor}
\definecolor{red}{rgb}{0.921569, 0.282353, 0.247059}
\definecolor{green}{rgb}{0.501961, 0.792157, 0.239216}
\definecolor{orange}{rgb}{0.937255, 0.529412, 0.2}
\usepackage{longtable}
\usepackage[capitalize,noabbrev]{cleveref}
\usepackage[textsize=tiny]{todonotes}
\setcitestyle{square}

\theoremstyle{plain}
\newtheorem{theorem}{Theorem}[section]

\theoremstyle{definition}

\theoremstyle{remark}

\title{LiNC: Lightweight Noise Correction \\ via Per-Sample Trust and Gaussian Mixture Modeling}

\author{%
  Abhishek Moturu \\ 
  Department of Computer Science\\
  University of Toronto\\
  The Hospital for Sick Children \\
  UHN KITE Research Institute \\
  T-CAIREM \\
  Vector Institute \\
  \texttt{moturuab@cs.toronto.edu} \\
  \And
    Babak Taati \thanks{equal contribution} \\ 
  Department of Computer Science\\
  Institute of Biomedical Engineering \\
  University of Toronto\\
  Rehabilitation Sciences Institute \\
  UHN KITE Research Institute \\
  Vector Institute \\
  \texttt{taati@cs.toronto.edu} \\
  \AND
  Anna Goldenberg $^*$  \\
  Department of Computer Science\\
  Department of Laboratory Medicine and Pathobiology \\
  University of Toronto\\
  The Hospital for Sick Children \\
  T-CAIREM \\
  Vector Institute \\
  \texttt{anna.goldenberg@utoronto.ca} \\
}

\begin{document}

\newcommand{\synpain}{Syn\textsc{Pain}\xspace}
\newcommand{\gaitgen}{\textsc{Gait}Gen\xspace}
\maketitle
\setcounter{footnote}{0}


\begin{abstract}
Label noise is common in medical imaging datasets due to factors such as inter-rater variability, annotation errors, and ambiguous cases. This can severely undermine the reliability and clinical effectiveness of machine learning models trained using those datasets. To address this challenge, we introduce Lightweight Noise Correction (LiNC), which adds a single trainable trust parameter per training sample and learns when to use the observed label and when to defer to the model during a standard training loop. The key idea is to train using a convex combination of the observed label and the model's own predictive distribution, controlled by a per-sample trust parameter. We show that the gradient of this objective drives trust values in opposite directions for clean versus noisy samples in the early training phase, yielding separable trust distributions. We use a 3-component Gaussian Mixture Model over the trust values to separate them into clean, ambiguous, and noisy cases and then execute a short soft-correction phase on the noisy cases and a final hard correction phase. Experiments on ten 2D datasets from MedMNISTv2 under label noise of up to 50\% show consistent gains in accuracy and strong mislabel detection. LiNC adds negligible asymptotic overhead: the training-time complexity remains dominated by the base network, with additional memory growing linearly with the size of the training set.
\end{abstract}

\section{Introduction}
\label{sec:intro}
Within healthcare, medical imaging is essential in supporting clinical tasks such as diagnosis, treatment planning, and disease monitoring. Recent advancements in deep learning have significantly improved medical image analysis by automating the detection and classification of various medical conditions. These advancements heavily depend on the availability of accurately labeled datasets. Machine learning models tend to severely degrade in performance when trained on noisy data. On the other hand, label noise is prevalent in healthcare datasets due to inconsistent annotations, human errors, and ambiguous findings during the annotation process. Label noise can change the optimization landscape, hurt calibration, and amplify spurious correlations, if not properly addressed \citep{karimi2020deep,zhang2016understanding,zhang2021understanding,arpit2017closer,guo_calibration_2017}.

Many studies focus on learning with noisy labels in natural images \citep{patrini2017making,han2018co,reed2014training,li2020dividemix,zhang2018generalized,wang2019symmetric}. However, medical imaging has additional constraints: clean validation sets are expensive, data distributions can shift, and practitioners may need interpretability into which training labels are unreliable. Methods that require additional models, extensive hyperparameter tuning, or dataset-specific or task-specific thresholds are hard to justify in clinical workflows, which are often time-constrained and resource-constrained.

We ask: \emph{Can we get label correction and label noise signals essentially ``for free'' while keeping training close to standard fine-tuning?} For this, we introduce LiNC, which uses a trainable trust parameter $\alpha_i$ per sample that decides how much the model should trust the observed label versus its own predictive distribution. Intuitively, if the model consistently assigns low probability to the observed label for a sample, the gradient pushes $\alpha_i$ down, shifting supervision toward the model prediction and if the model assigns high probability to the observed label, $\alpha_i$ increases, preserving supervision. This produces a natural separation between clean and noisy samples without any access to true ground-truth labels.

\paragraph{Contributions.} LiNC is a lightweight noise correction method that does not need more models, clean validation sets, or pruning, yet yields three concrete benefits:
\begin{itemize}
    \item We derive a simple expression for the per-sample trust gradient, $\partial\mathcal{L}/\partial \alpha_i$, and show how it induces separability between clean and noisy labels.
    \item We use an unsupervised 3-component GMM over trust values (to separate noisy / ambiguous / clean samples) to get threshold-free separation and correction, inspired by GMM-based separation in noisy-label learning \citep{li2020dividemix}.
    \item We add $\mathcal{O}(N)$ memory and a negligible constant-factor compute cost, while producing per-sample trust scores that can be used for dataset audits.
\end{itemize}

\begin{figure*}
    \centering
    \includegraphics[scale=0.34]{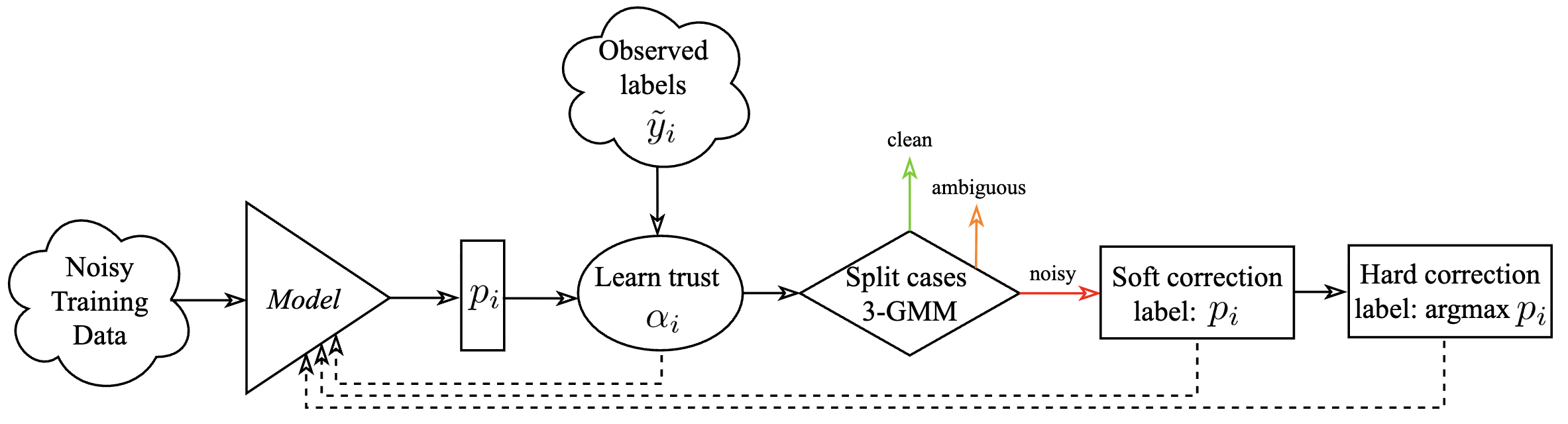}
    \caption{\textbf{Overview of LiNC.} The model produces a class-probability vector $p_i$ for each training sample. During soft warm-up, LiNC jointly learns a per-sample trust parameter $\alpha_i$ and trains using the target
$q_i = (1-\alpha_i)\operatorname{stopgrad}(p_i) + \alpha_i e_{\tilde{y}_i}$,
which interpolates between the model prediction and the observed label $\tilde{y}_i$. A three-component Gaussian mixture model (3-GMM) is then fitted to the learned trust values, with components ordered by their means and interpreted as noisy, ambiguous, and clean. Only samples with the lowest trust (i.e. the noisy component) are corrected: they first use $p_i$ as a soft target and are subsequently assigned the hard label $\arg\max p_{i}$ for final training. Clean and ambiguous samples are not relabeled, reducing the risk of over-correcting borderline cases. Dashed arrows indicate that the phase-specific supervision is fed back into successive updates of the same model.}
    \label{linc}
\end{figure*}

\section{Related Work}
\label{sec:relatedwork}

Classical approaches assume the existence of a noise transition matrix, or an approximate one, and perform forward/backward loss correction \citep{patrini2017making} or explicitly learn a noise model \citep{goldberger2017training}. Robust loss functions aim to reduce sensitivity to incorrect labels without explicitly correcting them, e.g., generalized cross entropy \citep{zhang2018generalized}, symmetric cross entropy \citep{wang2019symmetric}, and robust losses \citep{ghosh2017robust}. These methods are simple but do not directly provide an interpretable noise signal.

Given the empirical observation that deep networks fit clean data before memorizing noise \citep{arpit2017closer}, Co-teaching selects small-loss samples using two networks \citep{han2018co} and DivideMix models losses with a GMM and treats training as semi-supervised learning via MixMatch \citep{li2020dividemix,berthelot2019mixmatch}. These methods can be effective but typically require multiple networks, excessive tuning, or additional losses.

Pseudo-labeling \citep{lee2013pseudo} and bootstrapping \citep{reed2014training} blend observed labels with model predictions and progressive self label correction reduces confirmation bias \citep{wang2021proselflc,wei2020combating,song2019selfie}. LiNC has similarities to bootstrapping, but is different in a very important way: trust is learned per sample rather than fixed globally, and we provide an explicit noise separation procedure that avoids hard-coded thresholds.

Several techniques estimate label quality post-hoc using training dynamics (AUM \citep{pleiss_identifying_2020}, forgetting events \citep{toneva_empirical_2019}, DataMaps \citep{swayamdipta_dataset_2020}, EL2N/GraND \citep{paul_deep_2023}, Data-IQ \citep{seedat_dissecting_2024}, VoG \citep{agarwal_estimating_2022}, CNLCU-S \citep{xia_sample_2021}). LiNC produces an interpretable noise signal as a byproduct of training and directly uses it for correction.

\section{Method}
\label{sec:method}
\subsection{Problem Setup}
Let $\mathcal{D}=\{(x_i,\tilde{y}_i)\}_{i=1}^N$ denote the training set, where $x_i\in\mathcal{X}$ has potentially noisy observed labels $\tilde{y}_i\in\{1,\dots,C\}$, where $C$ is the number of classes, $N$ is the number of training samples.

We train classifier $f_\theta$ to output logits $z_i=f_\theta(x_i)$. Let $p_i=\mathrm{softmax}(z_i)$. We introduce per-sample trust parameters $\alpha_i\in(0,1)$.

\subsection{Trust-based Supervision}
For each sample, we form a soft target distribution as follows:
\begin{equation}
q_i(\alpha_i) = (1-\alpha_i)\mathrm{stopgrad}(p_i) + \alpha_i e_{\tilde{y}_i},
\label{eq:soft_target}
\end{equation}
where $e_{\tilde{y}_i}$ is the one-hot vector for the observed label and $\mathrm{stopgrad}(\cdot)$ blocks gradients through $p_i$ when we are updating $\alpha_i$. Cross-entropy loss with a soft target looks as follows:
\begin{equation}
\mathcal{L}_i(\theta,\alpha_i)= -\sum_{c=1}^{C} q_{i,c}(\alpha_i)\log p_{i,c}.
\label{eq:loss}
\end{equation}
This reduces to standard cross-entropy, when $\alpha_i=1$, and becomes self-training, with $p_i$ as the target, when $\alpha_i=0$.

\subsection{Trust Gradient for Clean vs. Noisy Labels}
We now show that during the warmup, gradient descent pushes $\alpha_i$ in opposite directions for clean vs. mislabeled samples.

\begin{theorem}[Sign of the trust gradient]
\label{thm:grad_sign}
For a fixed model output distribution $p_i$, the derivative of \ref{eq:loss} w.r.t. $\alpha_i$ is as follows:
\begin{equation}
\frac{\partial \mathcal{L}_i}{\partial \alpha_i} = \sum_{c=1}^C p_{i,c}\,\log p_{i,c} - \log p_{i,\tilde{y}_i}.
\label{eq:dldalpha}
\end{equation}
Hence, if the model agrees with the observed label, then $\partial\mathcal{L}_i/\partial\alpha_i<0$ and if the model disagrees with the observed label, then $\partial\mathcal{L}_i/\partial\alpha_i>0$.
\end{theorem}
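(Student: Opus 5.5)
The plan is a direct differentiation followed by a sign analysis, with the main care going into what ``agrees'' and ``disagrees'' should mean. Since $p_i$ is held fixed (the $\mathrm{stopgrad}$ in \eqref{eq:soft_target}), the only dependence of $\mathcal{L}_i$ on $\alpha_i$ is through the target $q_i(\alpha_i)$, which is affine in $\alpha_i$:
\[
\frac{\partial q_{i,c}}{\partial \alpha_i} = e_{\tilde{y}_i,c} - p_{i,c}.
\]
Substituting into \eqref{eq:loss} gives
\[
\frac{\partial \mathcal{L}_i}{\partial \alpha_i} = -\sum_{c}\bigl(e_{\tilde{y}_i,c}-p_{i,c}\bigr)\log p_{i,c} = \sum_c p_{i,c}\log p_{i,c} - \log p_{i,\tilde{y}_i},
\]
which is \eqref{eq:dldalpha}. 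I will also note that $\mathcal{L}_i$ is linear in $\alpha_i$, so this derivative is constant in $\alpha_i$; this matters for the dynamics but is not needed for the sign claim.

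For the sign statement, I will rewrite the derivative as $-H(p_i) - \log p_{i,\tilde{y}_i}$, where $H(p_i) = -\sum_c p_{i,c}\log p_{i,c}$ is the Shannon entropy. Equivalently, the derivative equals $\log\bigl(e^{-H(p_i)}/p_{i,\tilde{y}_i}\bigr)$. Its sign is therefore decided by comparing $p_{i,\tilde{y}_i}$ with the threshold $e^{-H(p_i)}$. This threshold is the weighted geometric mean $\prod_c p_{i,c}^{p_{i,c}}$, i.e.\ the ``typical'' probability the model assigns under its own distribution.

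The main obstacle is that the theorem's words ``agrees'' and ``disagrees'' must be made precise for the dichotomy to hold. I will define agreement as $p_{i,\tilde{y}_i} > e^{-H(p_i)}$ and disagreement as $p_{i,\tilde{y}_i} < e^{-H(p_i)}$; under these definitions both conclusions follow immediately. I will then connect this to the intuitive argmax reading:
\begin{itemize}
\item If $\tilde{y}_i = \arg\max_c p_{i,c}$, then $\log p_{i,\tilde{y}_i} \ge \sum_c p_{i,c}\log p_{i,c}$, because a maximum dominates any weighted average. The inequality is strict unless $p_i$ is uniform on its support. So argmax agreement gives $\partial\mathcal{L}_i/\partial\alpha_i < 0$ except in that degenerate case.
\item If $p_{i,\tilde{y}_i} < \min_{c:\,p_{i,c}>0} p_{i,c}$, or more generally $p_{i,\tilde{y}_i}$ lies below the weighted geometric mean (e.g.\ the model confidently predicts another class), the derivative is positive.
\end{itemize}
I will explicitly remark that the argmax-based notion of disagreement alone is not sufficient. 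For example, $p_i = (0.4, 0.35, 0.25)$ with $\tilde{y}_i = 2$ still gives a negative derivative. The precise statement should therefore use the geometric-mean threshold, which coincides with the intuitive cases when the model is confident.
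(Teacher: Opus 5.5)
Your proposal is correct and follows the paper's proof: the same differentiation of the loss, which is affine in $\alpha_i$ because of the stop-gradient, and the same sign criterion of comparing $-\log p_{i,\tilde{y}_i}$ with $H(p_i)$, which is exactly how the paper implicitly defines ``agrees''/``disagrees'' (your threshold $p_{i,\tilde{y}_i}$ versus $e^{-H(p_i)}$ is equivalent), while your argmax lemma and the counterexample $p_i=(0.4,0.35,0.25)$ make precise a point the paper leaves informal. One small slip: the condition $p_{i,\tilde{y}_i} < \min_{c:\,p_{i,c}>0} p_{i,c}$ can never hold, since $\tilde{y}_i$ itself lies in the support; you mean $\min_{c\neq\tilde{y}_i} p_{i,c}$, in which case writing the derivative as $\sum_{c\neq\tilde{y}_i} p_{i,c}\bigl(\log p_{i,c}-\log p_{i,\tilde{y}_i}\bigr)$ shows it is strictly positive.
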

\begin{proof}
Using \ref{eq:soft_target}, expand \ref{eq:loss}:
\(\mathcal{L}_i = -(1-\alpha_i)\sum_c p_{i,c}\log p_{i,c} - \alpha_i\log p_{i,\tilde{y}_i}.\) \\ Differentiating w.r.t. $\alpha_i$ yields: \(\frac{\partial \mathcal{L}_i}{\partial \alpha_i} = \sum_{c} p_{i,c}\,\log p_{i,c} - \log p_{i,\tilde{y}_i}\).
\begin{equation}
        \frac{\partial \mathcal{L}_i}{\partial \alpha}
        = \underbrace{-\log p_{i,\tilde{y}_i}}_{\text{NLL of observed label}} - \left(\underbrace{- \sum_{c} p_{i,c} \log p_{i,c}}_{\text{entropy } H(p_i)} \right)
\end{equation}
\begin{equation}
    = -\log p_{i,\tilde{y}_i} - H(p_i).
\end{equation}
In early training, deep neural networks tend to learn the easier and cleaner patterns first before memorizing noise \citep{arpit2017closer}. As a result, for most clean samples the model assigns relatively high probability to $\tilde{y}_i$, making $-\log p_{i,\tilde{y}_i}$ small, and the inequality $-\log p_{i,\tilde{y}_i} < H(p_i)$ holds, and for mislabeled samples, $\tilde{y}_i$ has a relatively low probability and $-\log p_{i,\tilde{y}_i} > H(p_i)$.

In other words, if the model agrees with the observed label, then we have $-\log p_{i,\tilde{y}_i} < H(p_i) \implies \frac{\partial \mathcal{L}_i}{\partial \alpha_i}<0$ and $\alpha$ increases toward $1$. And if the model disagrees with the observed label, then we have $-\log p_{i,\tilde{y}_i} > H(p_i) \implies \frac{\partial \mathcal{L}_i}{\partial \alpha_i}>0$ and $\alpha$ decreases toward $0$. Therefore, $\alpha_i$ becomes separable.
\end{proof}

This is closely related to the small-loss principle discussed by co-teaching and mixture-based methods \citep{han2018co,li2020dividemix}. 

\subsection{Separation of Clean vs. Noisy Labels}
Rather than picking a hard-coded threshold for $\alpha_i$, we fit a 3-component GMM to \(\{\alpha_i\}_{i=1}^N\) after warmup. The components are interpreted as \emph{noisy} (lowest mean), \emph{ambiguous} (middle mean), and \emph{clean} (highest mean). We use the standard expectation–maximization algorithm to fit the GMM \citep{dempster1977maximum} and obtain samples belonging to the three clusters. 

We use $K=3$ to be the number of GMM components for the separation of the trust parameters to explicitly model ambiguous samples. In preliminary experiments, $K=2$ tends to over-correct borderline cases, i.e. corrupt many correct labels, while $K=3$ yields a stable middle component to deal with ambiguous cases, similar to other mixture-based methods \citep{li2020dividemix}.

\subsection{Training Schedule}
LiNC runs in three phases, as shown in Figure~\ref{linc}:
\begin{itemize}
    \item \textbf{Soft warmup:} train using the soft target distribution (Equation \ref{eq:soft_target}) while performing manual gradient descent on $\alpha_i$.
    \item \textbf{Soft correction:} fit the GMM to find the ``noisy''  $\alpha_i$ cluster (lowest mean) to train those corresponding samples using the model predictions, while the remaining samples continue to train using the soft target distribution (Equation \ref{eq:soft_target}).
    \item \textbf{Hard correction:} correct the ``noisy'' labels by assigning them to be the $\arg\max$ of the model predictions and train using standard cross-entropy.
\end{itemize}

This makes LiNC cautious with borderline samples, whose labels may still be correct, while allowing it to confidently correct samples that are much more likely to be mislabeled. This reduces unnecessary label changes without leaving clear label errors uncorrected.

\begin{algorithm}[t]
\caption{Training with LiNC}
\label{alg:linc}
\begin{algorithmic}[1]
\STATE \textbf{Input:} training data $\mathcal{D}=\{(x_i,\tilde{y}_i)\}_{i=1}^N$, model $f_\theta$, trust parameters $\{\alpha_i\}$,\\
lr ($\eta_\alpha$) and wd ($\lambda_\alpha$) for $\{\alpha_i\}$, soft warmup epochs $w$, soft correction epochs $s$.
\STATE \textbf{Initialize:} $\alpha_i=1$
\FOR{epoch $=1$ to $E$}
    \FOR{minibatch $(x,\tilde{y},\mathrm{idx})$}
        \STATE $p \leftarrow \mathrm{softmax}(f_\theta(x))$
        \STATE $\alpha \leftarrow \alpha_{\mathrm{idx}}$
        \IF{epoch $\le w$} 
            \STATE $q \leftarrow (1-\alpha)\,\mathrm{stopgrad}(p) + \alpha\,e_{\tilde{y}}$
        \ENDIF
        \IF{$w < \text{epoch} \le w+s$}
            \STATE $q \leftarrow p$ for samples assigned to noisy GMM cluster
        \ENDIF
        \IF{$\text{epoch} > w+s$}
            \STATE $q \leftarrow e_{\hat{y}}$ where $\hat{y}=\arg\max p$
        \ENDIF
        \STATE calculate loss $\mathcal{L}(\theta,\alpha)= -\sum q(\alpha)\log p$
        \STATE update $\theta$ using optimizer step on $\nabla_\theta \mathcal{L}$
        \IF{epoch $\le w+s$}
            \STATE \hspace{1em} $\alpha_{\mathrm{idx}} \leftarrow \alpha_{\mathrm{idx}} - \eta_\alpha(\nabla_{\alpha_{\mathrm{idx}}}\mathcal{L} + \lambda_\alpha \alpha_{\mathrm{idx}})$
        \ENDIF
    \ENDFOR
    \IF{epoch $= w$}
        \STATE fit 3-component GMM to $\{\alpha_i\}$ \& identify noisy cluster (lowest mean)
    \ENDIF
    \IF{epoch $= w+s$}
        \STATE for $x_i \in \mathcal{D}$, correct labels $\hat{y}_i = \arg\max f_\theta(x_i)$ for noisy cluster
        \STATE freeze $\alpha_i$
    \ENDIF
\ENDFOR

\STATE \textbf{Return:} trained model $f_\theta$ and trust parameters $\{\alpha_i\}$
\end{algorithmic}
\end{algorithm}

\subsection{Complexity: With vs. Without LiNC}
\label{sec:complexity}
\paragraph{Runtime.} Let $E$ be the number of training epochs, $B$ be batch size, and $T_{\text{fwd}}(B)$ and $T_{\text{bwd}}(B)$ denote the time required for one forward and backward pass on batch size $B$, respectively. 

Standard training costs $T_{\text{base}} = E\cdot\frac{N}{B}\cdot (T_{\text{fwd}}(B) + T_{\text{bwd}}(B))$. 
This gives us $\mathcal{O}(EN)$ runtime without LiNC.

\begin{figure*}
\centering
\includegraphics[scale=0.3]{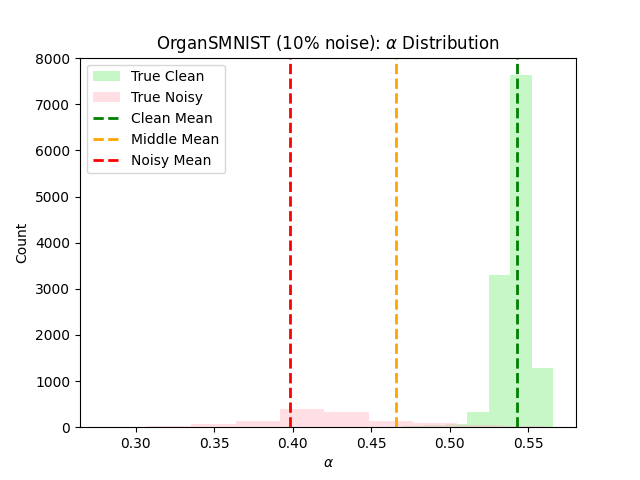}
\includegraphics[scale=0.3]{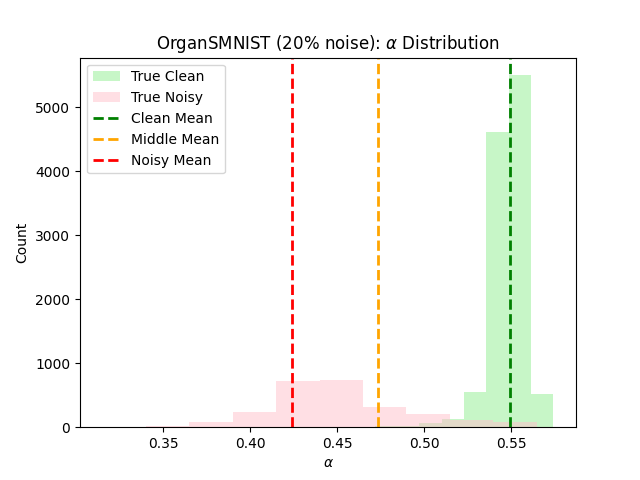}
\includegraphics[scale=0.3]{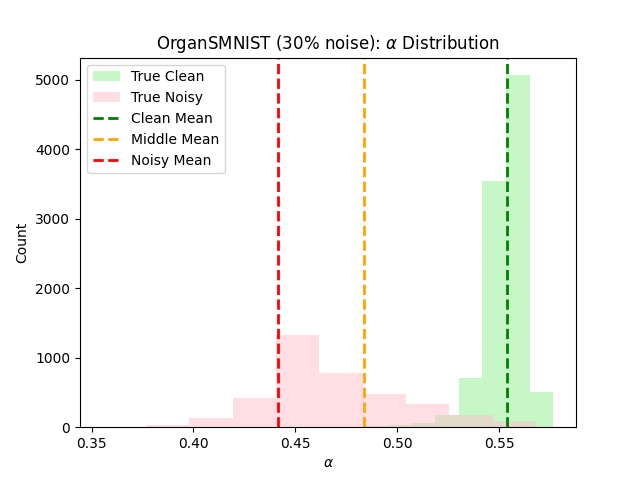}
\includegraphics[scale=0.3]{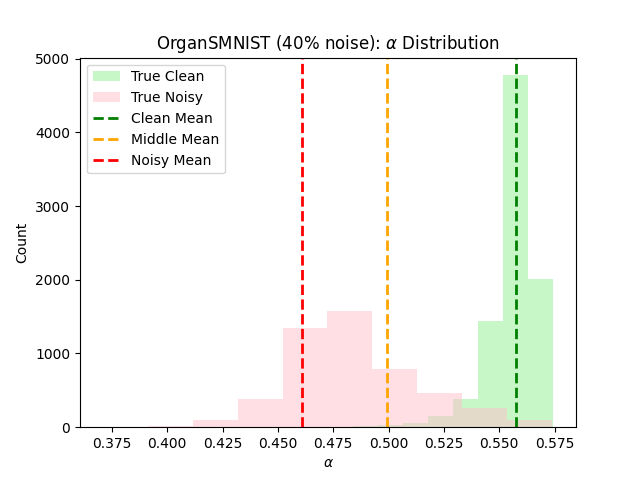}
\includegraphics[scale=0.3]{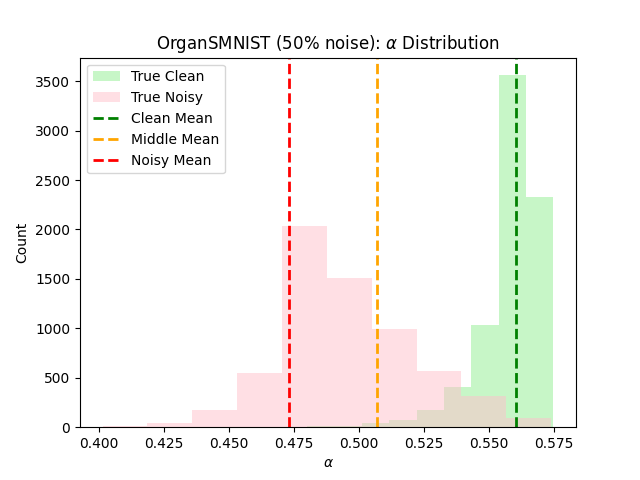}
\caption{Empirical separability of trust values $\alpha_i$ on OrganSMNIST under 10-50\% symmetric noise. Green / pink are true clean / noisy subsets, respectively, and the dashed lines indicate GMM component means (noisy / middle / clean).}
\label{fig:trust}
\end{figure*}

LiNC adds 3 main additional operations for total time
$T_{\text{LiNC}} = T_{\text{base}} + w\cdot\frac{N}{B}\cdot\mathcal{O}(B\,C) + \mathcal{O}(N) + \mathcal{O}(N\cdot T_{\text{fwd}}(1))$, as follows:\newline
\begin{enumerate}
    \item line 11: forming $q$ costs $\mathcal{O}(B\,C)$, $w$ times. Note that $w < E$.
    \item line 23: EM on $N$ scalars with $K=3$ and $I$ iterations costs $\mathcal{O}(NKI)=\mathcal{O}(N)$.
    \item line 26: computing $\hat{y}$ costs $\mathcal{O}(N\cdot T_{\text{fwd}}(1))$.
\end{enumerate}
This again gives us $\mathcal{O}(EN)$ runtime with LiNC.

\paragraph{Memory.} Standard training stores model parameters: $M_{\text{base}} = \mathcal{O}(|\theta|)$. 

LiNC in addition stores $N$ trust parameters: $M_{\text{LiNC}} = M_{\text{base}} + \mathcal{O}(N)$, which is negligible.

\section{Experimental Setup}
\label{sec:experiments}

\begin{table*}[h!]
\centering
\scriptsize
\setlength{\tabcolsep}{5pt}
\renewcommand{\arraystretch}{1.6}
\caption{AUC comparison of several noise detection baselines on the OrganSMNIST dataset with 20\% noise.}
\label{table:auc}
\begin{tabular}{lcccccccc}
\toprule
 & AUM & DataMaps & Data-IQ & EL2N & Forgetting & CNLCU-S & VoG & LiNC (ours) \\
\midrule
AUC & 0.8652 & 0.8351 & 0.8013 & 0.8547 & 0.6034 & 0.7878 & 0.9126 & \textbf{0.9837} \\
\bottomrule
\end{tabular}
\end{table*}

\begin{table*}[ht!]
\centering
\scriptsize
\setlength{\tabcolsep}{3pt}
\renewcommand{\arraystretch}{1.4}
\caption{Test top-1 accuracy on MedMNISTv2 at 0\% symmetric noise without LiNC. Baseline results are from~\citep{medmnistv2}. Two best results are in \textbf{bold}, with ViT-S/8-224 consistently achieving very high accuracy.}
\label{table:medmnist}
\resizebox{\linewidth}{!}{%
\begin{tabular}{lcccccccc}
\toprule
Dataset
& ResNet-18 (28) & ResNet-18 (224)
& ResNet-50 (28) & ResNet-50 (224)
& auto-sklearn & AutoKeras
& Google AutoML & ViT-S/8-224 \\
\midrule
PathMNIST      & 0.907 & 0.909 & \textbf{0.911} & 0.892 & 0.716 & 0.834 & 0.728 & \textbf{0.966} \\
DermaMNIST     & 0.735 & 0.754 & 0.735 & 0.731 & 0.719 & 0.749 & 0.768 & \textbf{0.864} \\
OCTMNIST       & 0.743 & 0.763 & 0.762 & \textbf{0.776} & 0.601 & 0.763 & 0.771 & \textbf{0.891} \\
PneumoniaMNIST & 0.854 & 0.864 & 0.854 & 0.884 & 0.855 & 0.878 & \textbf{0.946} & \textbf{0.934} \\
BreastMNIST    & \textbf{0.863} & 0.833 & 0.812 & 0.842 & 0.803 & 0.831 & 0.861 & \textbf{0.917} \\
BloodMNIST     & 0.958 & 0.963 & 0.956 & 0.950 & 0.878 & 0.961 & \textbf{0.966} & \textbf{0.989} \\
TissueMNIST    & 0.676 & \textbf{0.681} & 0.680 & 0.680 & 0.532 & \textbf{0.703} & 0.673 & 0.665 \\
OrganAMNIST    & 0.935 & \textbf{0.951} & 0.935 & 0.947 & 0.762 & 0.905 & 0.886 & \textbf{0.953} \\
OrganCMNIST    & 0.900 & \textbf{0.920} & 0.905 & 0.911 & 0.829 & 0.879 & 0.877 & \textbf{0.926} \\
OrganSMNIST    & 0.782 & 0.778 & 0.770 & 0.785 & 0.672 & \textbf{0.813} & 0.749 & \textbf{0.807} \\
\bottomrule
\end{tabular}
}
\end{table*}

\begin{table*}[ht!]
\centering
\scriptsize
\setlength{\tabcolsep}{2pt}
\renewcommand{\arraystretch}{1.5}
\caption{Test top-1 accuracy under symmetric noise for 10 2D MedMNISTv2 datasets. Standard training (No LiNC) and LiNC are evaluated at their Best and Last epochs.}
\label{table:medmnist_noise}
\resizebox{\linewidth}{!}{%
\begin{tabular}{l*{20}{c}}
\toprule
& \multicolumn{4}{c}{10\%}
& \multicolumn{4}{c}{20\%}
& \multicolumn{4}{c}{30\%}
& \multicolumn{4}{c}{40\%}
& \multicolumn{4}{c}{50\%} \\
\cmidrule(lr){2-5}\cmidrule(lr){6-9}
\cmidrule(lr){10-13}\cmidrule(lr){14-17}
\cmidrule(lr){18-21}
Dataset
& \multicolumn{2}{c}{No LiNC} & \multicolumn{2}{c}{LiNC}
& \multicolumn{2}{c}{No LiNC} & \multicolumn{2}{c}{LiNC}
& \multicolumn{2}{c}{No LiNC} & \multicolumn{2}{c}{LiNC}
& \multicolumn{2}{c}{No LiNC} & \multicolumn{2}{c}{LiNC}
& \multicolumn{2}{c}{No LiNC} & \multicolumn{2}{c}{LiNC} \\
\cmidrule(lr){2-3}\cmidrule(lr){4-5}
\cmidrule(lr){6-7}\cmidrule(lr){8-9}
\cmidrule(lr){10-11}\cmidrule(lr){12-13}
\cmidrule(lr){14-15}\cmidrule(lr){16-17}
\cmidrule(lr){18-19}\cmidrule(lr){20-21}
& Best & Last & Best & Last
& Best & Last & Best & Last
& Best & Last & Best & Last
& Best & Last & Best & Last
& Best & Last & Best & Last \\
\midrule
PathMNIST      & 0.9670 & 0.9536 & 0.9676 & 0.9154 & 0.9650 & 0.7960 & 0.9654 & 0.9331 & 0.9627 & 0.7055 & 0.9629 & 0.9055 & 0.9529 & 0.5864 & 0.9526 & 0.8852 & 0.9487 & 0.5057 & 0.9489 & 0.8033 \\
DermaMNIST     & 0.8430 & 0.8430 & 0.8530 & 0.8501 & 0.8154 & 0.7922 & 0.8342 & 0.8342 & 0.7988 & 0.7201 & 0.8266 & 0.8266 & 0.7790 & 0.6569 & 0.8132 & 0.8132 & 0.7529 & 0.5646 & 0.7934 & 0.7934 \\
OCTMNIST       & 0.9189 & 0.8155 & 0.9357 & 0.8399 & 0.9062 & 0.7363 & 0.9266 & 0.8724 & 0.8742 & 0.6777 & 0.9191 & 0.8350 & 0.8470 & 0.6642 & 0.8620 & 0.8386 & 0.8021 & 0.5953 & 0.8380 & 0.8217 \\
PneumoniaMNIST & 0.9010 & 0.8833 & 0.9355 & 0.9033 & 0.8717 & 0.8623 & 0.9183 & 0.9054 & 0.8690 & 0.7884 & 0.9386 & 0.8908 & 0.8496 & 0.8174 & 0.9194 & 0.8989 & 0.8493 & 0.7884 & 0.9013 & 0.8685 \\
BreastMNIST    & 0.9353 & 0.8979 & 0.9135 & 0.9135 & 0.9018 & 0.8583 & 0.8956 & 0.8956 & 0.8605 & 0.8331 & 0.8862 & 0.8566 & 0.7974 & 0.7500 & 0.8092 & 0.7952 & 0.7640 & 0.7148 & 0.8169 & 0.7913 \\
BloodMNIST     & 0.9832 & 0.9681 & 0.9881 & 0.9881 & 0.9829 & 0.9315 & 0.9844 & 0.9844 & 0.9761 & 0.8845 & 0.9818 & 0.9818 & 0.9757 & 0.8011 & 0.9769 & 0.9769 & 0.9673 & 0.7092 & 0.9739 & 0.9739 \\
TissueMNIST    & 0.6652 & 0.5475 & 0.6635 & 0.6532 & 0.6558 & 0.4992 & 0.6536 & 0.6441 & 0.6403 & 0.4359 & 0.6424 & 0.6339 & 0.6367 & 0.3979 & 0.6307 & 0.6307 & 0.6203 & 0.3445 & 0.6182 & 0.6145 \\
OrganAMNIST    & 0.9399 & 0.9150 & 0.9422 & 0.9422 & 0.9422 & 0.8444 & 0.9420 & 0.9342 & 0.9401 & 0.7596 & 0.9402 & 0.9169 & 0.9276 & 0.6488 & 0.9280 & 0.8938 & 0.9127 & 0.5602 & 0.9138 & 0.8769 \\
OrganCMNIST    & 0.9016 & 0.8880 & 0.9086 & 0.9086 & 0.8879 & 0.8424 & 0.8954 & 0.8927 & 0.8780 & 0.7743 & 0.8804 & 0.8690 & 0.8521 & 0.6875 & 0.8597 & 0.8366 & 0.8363 & 0.5896 & 0.8376 & 0.7945 \\
OrganSMNIST    & 0.7884 & 0.7774 & 0.7941 & 0.7941 & 0.7819 & 0.7282 & 0.7825 & 0.7802 & 0.7616 & 0.6563 & 0.7638 & 0.7504 & 0.7464 & 0.5882 & 0.7499 & 0.7228 & 0.7343 & 0.5097 & 0.7366 & 0.6845 \\
\bottomrule
\end{tabular}
}
\end{table*}

\subsection{Datasets and Noise Protocol}
We evaluate on ten 2D datasets from MedMNISTv2 \citep{medmnistv2} from varying domains including: PathMNIST, DermaMNIST, OCTMNIST, PneumoniaMNIST, BreastMNIST, BloodMNIST, TissueMNIST, OrganAMNIST, OrganCMNIST, and OrganSMNIST. Following common noisy-label evaluations \citep{han2018co,li2020dividemix}, we inject symmetric label noise at rates of $\rho\in\{0.1,0.2,0.3,0.4,0.5\}$ by randomly replacing a fraction $\rho$ of training labels with a uniformly sampled incorrect class.

\subsection{Model and Training}
We finetune an ImageNet-pretrained Vision Transformer \citep{dosovitskiy2020image} (ViT-S/8-224, frozen everywhere except classifier head and last transformer block) using Adam \citep{kingma2014adam} for a total of 70 epochs, with batch size 128, learning rate $5e-4$, and weight decay $1e-4$, along with a MultiStepLR scheduler that reduces the learning rate by a factor of 0.7 at 10, 20, 40, and 60 epochs. We report the best test accuracy and last test accuracy. 

LiNC uses soft warmup for $w=5$ epochs, soft correction for $s=5$ epochs, and hard correction for the remainder. Performance is relatively insensitive to the duration of the soft warmup and soft correction phases. In practice, around 5–10 epochs for the soft phases was sufficient to obtain stable, competitive results. The trust learning rate $\eta_\alpha=1$, and trust weight decay $\lambda_\alpha=1e-1$. All non-LiNC hyperparameters are shared between baseline and LiNC. The optimal hyperparameters without LiNC are still optimal with LiNC.

\subsection{Separation of Trust Parameters}
Figure~\ref{fig:trust} shows the empirical distributions of the learned trust parameters $\alpha_i$ after the five-epoch soft warm-up on OrganSMNIST. Across all evaluated noise rates, samples with clean observed labels are concentrated at higher trust values, whereas mislabeled samples tend to receive lower trust values. This behavior is consistent with Theorem~\ref{thm:grad_sign}: early in training, the model is more likely to agree with correctly labeled examples and disagree with incorrectly labeled examples, causing their trust parameters to move in opposite directions.

The separation remains visible as the symmetric noise rate increases from $10\%$ to $50\%$. At higher noise rates, the noisy distribution becomes broader and overlap near the boundary increases, but the low-, middle-, and high-mean GMM components remain ordered. The middle component provides a buffer between the two dominant modes, allowing LiNC to treat borderline samples as ambiguous rather than automatically correcting them. This is particularly important in medical imaging, where disagreement with an observed label may reflect diagnostic difficulty rather than an annotation error. By restricting correction to the lowest-mean component, LiNC avoids imposing a binary clean/noisy decision on every uncertain example.

Table~\ref{table:auc} provides a quantitative evaluation of this separation on OrganSMNIST with $20\%$ noise. LiNC achieves an AUC of $0.9837$, compared with $0.9126$ for VoG, the strongest competing method in this experiment. This corresponds to an absolute improvement of $0.0711$ in AUC. LiNC also substantially outperforms methods based on margins, forgetting events, losses, gradients, and other measures of sample difficulty. These results indicate that a single scalar learned within the ordinary training loop can provide a highly informative signal of possible label corruption without requiring a clean reference set, a second model, or a predefined trust threshold.

The magnitude of $\alpha_i$ should be interpreted as a model- and training-dependent trust score rather than as a probability that a label is correct. In practice, the learned scores and GMM assignments can support two uses: selecting the lowest-trust cases for correction during training and producing a ranked list of potentially problematic annotations for subsequent dataset review.

\begin{figure*}
\centering
\includegraphics[scale=0.36]{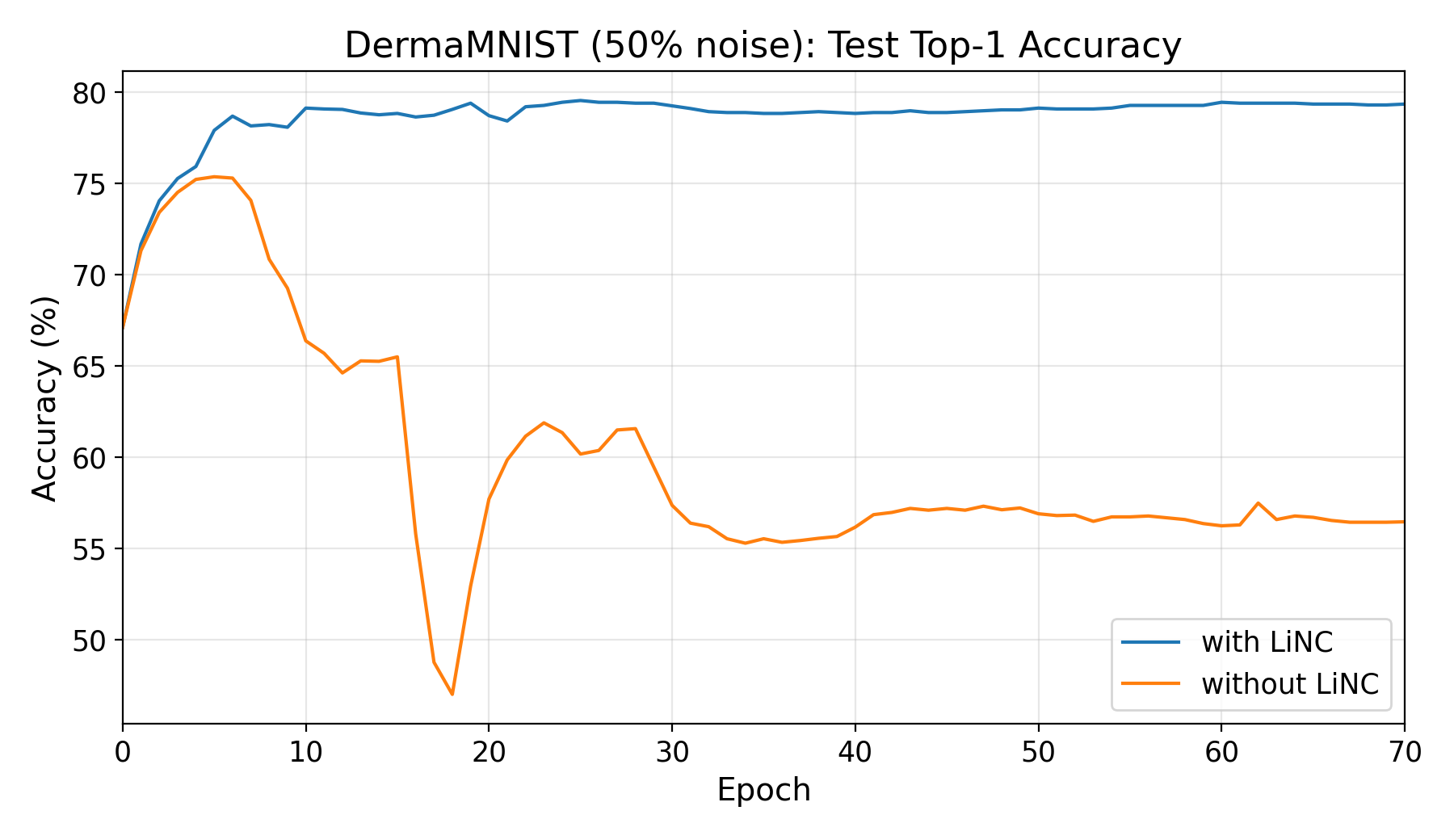}
\caption{Test top-1 accuracy over 70 epochs for DermaMNIST with 50\% symmetric noise. LiNC improves convergence and final accuracy compared to baseline training.}
\label{fig:plot}
\end{figure*}

\subsection{Effect on Downstream Performance}
Before introducing label noise, Table~\ref{table:medmnist} establishes that the selected ViT-S/8-224 model provides a strong reference model for the downstream experiments. It achieves the highest listed test accuracy on seven of the ten MedMNISTv2 datasets and remains competitive on the other three. The improvements in Table~\ref{table:medmnist_noise} therefore do not arise from comparing LiNC with an intentionally weak architecture. Moreover, LiNC and standard training use the same model, optimizer, learning-rate schedule, augmentations, and non-LiNC hyperparameters, isolating the effect of the proposed trust learning and correction procedure.

Figure~\ref{fig:plot} illustrates the training dynamics on DermaMNIST under $50\%$ symmetric noise. Standard training initially reaches a test accuracy of $0.7529$ but subsequently deteriorates to $0.5646$ as training continues. In contrast, LiNC reaches a higher best accuracy of $0.7934$ and maintains the same accuracy at the final epoch. The resulting improvement is therefore $4.05$ percentage points at the best epoch but $22.88$ percentage points at the last epoch. This trajectory is consistent with the role of LiNC: the method preserves useful early learning while preventing the later optimization process from increasingly fitting corrupted targets.

The same pattern appears across the ten datasets in Table~\ref{table:medmnist_noise}. Averaging across datasets, LiNC improves last-epoch accuracy by $2.19$, $7.86$, $12.31$, $16.94$, and $21.41$ percentage points at noise rates of $10\%$, $20\%$, $30\%$, $40\%$, and $50\%$, respectively. Thus, the benefit grows monotonically as label corruption becomes more severe. At $50\%$ noise, mean last-epoch accuracy increases from $0.5882$ without LiNC to $0.8023$ with LiNC. Improvements are not confined to a small subset of datasets: LiNC achieves higher last-epoch accuracy in 49 of the 50 dataset-noise combinations and higher best-epoch accuracy in 42 of 50 combinations.

There is a large difference between best- and last-epoch performance with and without LiNC. Averaged over all 50 experimental conditions, the difference between best- and last-epoch accuracy is $13.14$ percentage points for standard training but only $2.31$ percentage points for LiNC. At $50\%$ noise, this degradation grows to $23.06$ points without LiNC, compared with only $3.56$ points with LiNC. LiNC stabilizes generalization after the model would otherwise begin memorizing corrupted labels.

\section{Discussion}
\label{sec:discussion}
LiNC combines three simple mechanisms that address complementary aspects of noisy-label learning. First, the per-sample trust parameter provides a direct, differentiable representation of the model's agreement with each observed label. Second, the three-component GMM translates the resulting trust distribution into noisy, ambiguous, and clean groups without requiring a manually selected threshold or knowledge of the true noise rate. Third, the staged correction schedule limits the risk of immediately reinforcing incorrect model predictions. During soft correction, model outputs are used as distributions rather than hard class assignments and hard correction occurs only after this intermediate phase and only for samples assigned to the lowest-trust component.

The results suggest that LiNC is resistant to late-stage memorization. Improvements in best-epoch accuracy are comparatively modest, averaging $1.31$ percentage points across all conditions, whereas the average improvement at the last epoch is $12.14$ points. This difference is expected from the method's design. Early in training, both approaches can learn predictive structure from the clean portion of the data. Their behavior diverges later, when standard cross-entropy continues to optimize against corrupted labels while LiNC has identified and replaced many of those targets. The resulting stability is useful for model selection.

LiNC has the asymptotic runtime of standard training and requires only $\mathcal{O}(N)$ additional memory for the trust parameters. All trust updates occur within the same training loop, and the method does not require an ensemble, a second network, or a separate clean dataset. The learned trust values also provide an auditable output. For a medical dataset, low-trust cases could be prioritized for expert re-review, while cases assigned to the middle component could be examined as potentially ambiguous examples.

There are some limitations to this method. First, genuine clinical ambiguity may also lead multiple defensible labels to exist for the same case. Under these conditions, the learned trust score may capture a combination of annotation reliability, sample difficulty, and model noise rather than label noise alone. Evaluation on datasets with real-world label errors and inter-rater disagreement is therefore necessary.

Second, the method depends on the early-learning behavior underlying Theorem~\ref{thm:grad_sign}. A model can confidently disagree with a correct label or confidently agree with an incorrect label when the same systematic error is repeated throughout the dataset. This risk may be greater for rare classes or underrepresented patient groups. Future work should examine class-conditional and subgroup-specific trust distributions and measure whether correction rates or errors differ across clinically relevant groups.

Third, a three-component GMM will return three components even when the trust distribution contains little evidence of label corruption. This limitation is most apparent when the dataset is completely clean, but it may also matter under low noise, as suggested by the PathMNIST result at $10\%$. Future work should extend to allow the model to abstain from hard correction, select the number of mixture components adaptively, or send low-confidence cases to expert review.

\section{Conclusion}
\label{sec:conclusion}
We introduce LiNC, a lightweight method that learns a trust parameter for every training sample and uses the resulting distribution to separate noisy, ambiguous, and clean cases. LiNC achieves an AUC of $0.9837$ for detecting corrupted labels in the evaluated OrganSMNIST setting and improves mean last-epoch accuracy by $21.41$ percentage points across ten datasets at $50\%$ symmetric noise. It also substantially reduces the deterioration between peak and final performance, demonstrating generalization benefits. LiNC requires neither a clean validation set nor an additional model, preserves the asymptotic runtime of standard training, and adds only $\mathcal{O}(N)$ memory. Its learned trust values provide both a mechanism for targeted correction and an interpretable signal for dataset auditing. Future work should evaluate this framework under realistic, unknown noise processes and develop human-review mechanisms for human-model disagreement.


\bibliography{aaai2027}
\bibliographystyle{plainnat}

\end{document}